\title{Last Iterate Convergence in Overparameterized GANs}
\author{Elbert Du}
\date{May 2021}
\begin{document}

\maketitle

\begin{abstract}
In this work, we showed that the Implicit Update and Predictive Methods dynamics introduced in prior work satisfy last iterate convergence to a neighborhood around the optimum in overparameterized GANs, where the size of the neighborhood shrinks with the width of the neural network. This is in contrast to prior results, which only guaranteed average iterate convergence.
\end{abstract}

\section{Introduction}

One of the new and exciting results from the last few years is the Neural Tangent Kernel (NTK) \cite{jacot2020neural}, which shows that as we increase the width of every layer in a neural network to $\infty$, the function computed by the neural network approaches a linear function. This makes it possible to analyze the convergence of neural networks with only the assumption that they are sufficiently wide, and \cite{liao2020provably} proves the convergence of wide GANs using projected gradient descent in the average iterate.\\

In this paper, I adapt the analysis from \cite{liang2019interaction} to demonstrate convergence of Implicit Update (IU) dynamics and Predictive Methods (PM) to a neighborhood around the optimum, with the size of this neighborhood shrinking with the width of the neural network. To do this, I separate the error term from the known analysis in a manner similar to that of \cite{liao2020provably} and then compute the convergence rate and the size of the neighborhood by combining these two terms.

\section{Brief Overview of Liao et. al's approach}

The first key lemma from this paper (slightly abridged here, I left out some assumptions to avoid getting bogged down in the details of this lemma as this is only for motivating my approach) is one that allows us to bound the convergence using regret:

\begin{lemma}
If our sequence of functions $\{(f_t, u_t)\}_{t=1}^T$ satisfies

$$\frac{1}{T} \sum_{t=1}^T \phi(f_t,u_t) \le \min_{f} \frac{1}{T} \sum_{t=1}^T \phi(f,u_t) + \epsilon_f$$
and
$$\frac{1}{T} \sum_{t=1}^T \phi(f_t,u_t) \ge \max_{u} \frac{1}{T} \sum_{t=1}^T \phi(f_t,u) - \epsilon_u$$

Then the primal error can be bounded by $\epsilon_f + \epsilon_u$.
\end{lemma}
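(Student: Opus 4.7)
The plan is to interpret the ``primal error'' as the duality gap of the averaged iterates $\bar f := \frac{1}{T}\sum_{t=1}^T f_t$ and $\bar u := \frac{1}{T}\sum_{t=1}^T u_t$, namely
\[
\mathrm{Gap}(\bar f, \bar u) \;=\; \max_u \phi(\bar f, u) \;-\; \min_f \phi(f, \bar u),
\]
and to bound this quantity by $\epsilon_f + \epsilon_u$. The omitted assumptions almost certainly include convexity of $\phi(\cdot, u)$ in $f$ and concavity of $\phi(f, \cdot)$ in $u$; these are exactly what is needed to convert averages of function values into values at the averaged iterate via Jensen's inequality.

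First I would apply Jensen on the concave direction to get $\max_u \phi(\bar f, u) \le \max_u \frac{1}{T}\sum_t \phi(f_t, u)$, and on the convex direction to get $\min_f \phi(f, \bar u) \ge \min_f \frac{1}{T}\sum_t \phi(f, u_t)$. Next I would plug these into the duality gap to obtain
\[
\mathrm{Gap}(\bar f, \bar u) \;\le\; \max_u \tfrac{1}{T}\sum_t \phi(f_t, u) \;-\; \min_f \tfrac{1}{T}\sum_t \phi(f, u_t).
\]
Now the two hypotheses of the lemma are exactly the statements that the common quantity $\frac{1}{T}\sum_t \phi(f_t, u_t)$ sandwiches both the $\max$ term (from above, up to $\epsilon_u$) and the $\min$ term (from below, up to $\epsilon_f$). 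Substituting both bounds, the $\frac{1}{T}\sum_t \phi(f_t, u_t)$ terms cancel and one is left with $\epsilon_f + \epsilon_u$, as desired.

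There is no real obstacle here: the argument is the standard online-to-batch reduction for convex--concave saddle-point problems, and the lemma is essentially a bookkeeping statement. The only point requiring care is making the convex--concave structure (which was abridged from the statement) explicit, since without it the Jensen step fails and the ``primal error'' need not even be well-defined as a duality gap. If instead ``primal error'' were meant in a different sense (e.g.\ $\max_u \phi(\bar f, u) - \min_f \max_u \phi(f, u)$), one would additionally invoke the saddle-point inequality $\min_f \phi(f, \bar u) \le \min_f \max_u \phi(f, u)$, but the same two hypotheses still close the argument.
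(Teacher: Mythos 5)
The paper does not actually prove this lemma---it is quoted, in abridged form, from Liao et al.\ and used only to motivate the regret-decomposition strategy, so there is no in-paper proof to compare against. That said, your argument is correct and is the standard online-to-batch reduction: you correctly identify the abridged hypotheses (convexity of $\phi(\cdot,u)$, concavity of $\phi(f,\cdot)$) needed for the Jensen steps $\max_u \phi(\bar f,u)\le\max_u\tfrac1T\sum_t\phi(f_t,u)$ and $\min_f\phi(f,\bar u)\ge\min_f\tfrac1T\sum_t\phi(f,u_t)$, after which the two regret bounds sandwich the common term $\tfrac1T\sum_t\phi(f_t,u_t)$ and the duality gap collapses to $\epsilon_f+\epsilon_u$. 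Your closing remark about the alternative reading of ``primal error'' is also fine, since $\min_f\phi(f,\bar u)\le \min_f\max_u\phi(f,u)$ makes the duality-gap bound dominate the primal-suboptimality bound.
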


Given this, we now just need to compute regret for one of the players. Let $\theta_t$ be the parameters at time $t$, $\theta$ be the optimum, $\phi_t$ be the function computed by our model at time $t$, and let $\hat{\phi_t}$ be the linear approximation of $\phi_t$ given to us by NTK. Then, we can write our regret as following expression:

$$\left(\frac{1}{T} \sum_{t=1}^T \phi_t(\theta_t) - \frac{1}{T}\sum_{t=1}^T \hat{\phi_t}(\theta_t)\right) +\left( \frac{1}{T} \sum_{t=1}^T \hat{\phi_t}(\theta_t) - \frac{1}{T}\sum_{t=1}^T \hat{\phi_t}(\theta) \right) + \left( \frac{1}{T} \sum_{t=1}^T \hat{\phi_t}(\theta) - \frac{1}{T}\sum_{t=1}^T \phi_t(\theta) \right)$$

The first term is the error of the linear approximation over all of the parameters over time, and the third term is the error of the linear approximation at the optimum. These can be handled by looking at the error of the linear approximation presented in the NTK.\\

Then, the second term is simply the difference between the average iterate of our linear approximation and the optimum in the linear approximation. This can be computed using known methods in convex optimization. As such, we can see the power of separating out the error term and directly applying known results for the linear case.\\

\section{Last Iterate Results}

For our proofs, we will use the following lemma, which is the fourth statement in Lemma B.1 from \cite{liao2020provably}:
\begin{lemma}
Given a neural network with $H$ layers each of width $m = \Omega(d^{3/2}B^{-1}H^{-3/2}\log^{3/2}(m^{1/2}B^{-1}))$, where $d$ is the number of dimensions in the input, the weight vector $W$ always satisfies $||W-W_0||_2 \le B$ for $B = O(m^{1/2}H^{-6}\log^{-3}m)$ and initialization at $W_0$, then with probability at least $1-e^{-\Omega(\log^2 m)}$,

$$\norm{\nabla_W f(x,W) - \nabla_w \hat{f}(x,W)}_2 = O\left(B^{1/3}m^{-1/6}H^{5/2}\log^{1/2}m\right)$$

\end{lemma}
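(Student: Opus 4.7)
The plan is to reduce the gradient difference bound to a stability result for backpropagation near initialization. By construction $\hat{f}(x, W) = f(x, W_0) + \langle \nabla_W f(x, W_0), W - W_0 \rangle$, so $\nabla_W \hat{f}(x, W) = \nabla_W f(x, W_0)$, and the quantity of interest collapses to $\norm{\nabla_W f(x, W) - \nabla_W f(x, W_0)}_2$. I would then decompose this layer-by-layer via the chain rule, writing the gradient at layer $\ell$ as an outer product of forward activations $h^{(\ell-1)}(x;W)$ and backward signals $b^{(\ell)}(x;W)$, and reduce the overall claim to controlling how far each of these quantities drifts from their values at $W_0$.

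First I would collect the standard NTK concentration facts at initialization: with high probability over a Gaussian $W_0$, the forward activations and backward signals at every layer have norms of the expected order (in $m$ and $H$), and the Jacobians of the per-layer maps are well conditioned. Next I would bound the forward drift by induction on $\ell$: assuming $\norm{W^{(\ell)} - W_0^{(\ell)}}_2 \le B$ for each $\ell$, a Lipschitz argument through the ReLU nonlinearity propagates the perturbation, so $\norm{h^{(\ell)}(x;W) - h^{(\ell)}(x;W_0)}$ grows at most geometrically in $\ell$. An analogous backward induction controls the drift of $b^{(\ell)}$. Plugging both drifts into the chain-rule decomposition and summing over the $H$ layers yields the claimed $H^{5/2}$ factor, while taking a union bound over $\mathrm{poly}(m)$ neurons produces the failure probability $e^{-\Omega(\log^2 m)}$ and the $\log^{1/2} m$ factor.

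The main obstacle is getting the right power of $m$, namely $m^{-1/6}$ rather than the naive $m^{-1/2}$ one would expect from vanilla Gaussian concentration. The bottleneck is that the pattern of activated ReLU units shifts as $W$ moves away from $W_0$, and the number of ``flipped'' units scales like a fractional power of $B/\sqrt{m}$; these flipped units contribute a higher-order but non-negligible error that must be bounded separately via a counting argument combined with anti-concentration of the pre-activations. Balancing this flipped-unit error against the smooth drift through units whose activation state is stable is what forces the fractional exponents, producing the factor $B^{1/3} m^{-1/6}$ rather than a cleaner polynomial rate. Carrying this balance consistently across all $H$ layers, and verifying that the per-layer error compounds only to $H^{5/2}$ rather than something exponential, is where I expect the bookkeeping to be most delicate.
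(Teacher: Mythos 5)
The paper does not actually prove this lemma: it is imported verbatim as the fourth statement of Lemma B.1 in Liao et al.\ (2020) and invoked as a black box, so there is no in-paper argument to compare yours against.

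That said, your sketch has the right high-level shape for how the cited reference establishes such a bound. The key reduction you make is correct: since $\hat{f}$ is the first-order Taylor expansion of $f$ at $W_0$, $\nabla_W \hat{f}(x,W) = \nabla_W f(x,W_0)$, so the claim is exactly a bound on the gradient drift $\| \nabla_W f(x,W) - \nabla_W f(x,W_0) \|_2$; a layer-wise chain-rule decomposition into forward-activation and backward-signal perturbations is indeed the standard route, and the fractional exponents $B^{1/3} m^{-1/6}$ do come from trading off the number of flipped ReLU units against the drift through units whose sign pattern is stable. However, as written this is still a plan rather than a proof. The counting and anti-concentration argument that pins down exactly $B^{1/3} m^{-1/6}$ (as opposed to, say, $(B/\sqrt{m})^{1/2}$ or some other power), the bookkeeping showing that per-layer errors compound only polynomially to $H^{5/2}$ rather than geometrically in $H$, and the union bound that yields failure probability $e^{-\Omega(\log^2 m)}$ with the attendant $\log^{1/2} m$ factor are each asserted but not carried out, and in the reference they occupy a substantial technical appendix. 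If your goal is merely to use the lemma, citing Liao et al.\ as the paper does is sufficient; if your goal is to re-derive it, these are the steps where the real work lies.
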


We will additionally assume that $C$ is invertible, and then we will prove the following simplifying lemma:

\begin{lemma}
Given a two-player game with utility function $U(\theta, \omega)$ linear in both $\theta$ and $\omega$, solving the game is equivalent to solving some game of the form

$$U'(\theta,\omega) = \theta^T C \omega$$
\end{lemma}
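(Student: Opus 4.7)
The plan is to start from the most general bilinear form and use invertibility to complete a change of variables that eliminates the linear part. Specifically, any function linear in each of $\theta$ and $\omega$ separately can be written as
\[ U(\theta,\omega) = \theta^T A \omega + b^T \theta + c^T \omega + d \]
for some matrix $A$, vectors $b,c$, and scalar $d$. The goal is to choose a shift that kills $b$ and $c$, absorbs $d$, and identifies $C$ with $A$.

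First I would locate the unique saddle point. Setting $\nabla_\theta U = A\omega + b = 0$ and $\nabla_\omega U = A^T \theta + c = 0$ yields $\omega^* = -A^{-1}b$ and $\theta^* = -A^{-T}c$; this is where invertibility of $C$ (equivalently, of $A$) is used. Next, I would substitute $\theta = \theta^* + \tilde\theta$ and $\omega = \omega^* + \tilde\omega$ and expand. The cross-terms $\tilde\theta^T A \omega^* + b^T \tilde\theta$ and $\theta^{*T} A \tilde\omega + c^T \tilde\omega$ vanish by the two equilibrium equations above, leaving
\[ U(\theta,\omega) = \tilde\theta^T A \tilde\omega + K \]
for a constant $K$ that is independent of $\tilde\theta,\tilde\omega$. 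I would then set $C := A$ and $U'(\tilde\theta,\tilde\omega) := \tilde\theta^T C \tilde\omega$, noting that the invertibility hypothesis on $C$ transfers from $A$ exactly.

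Finally, I would argue that this is an equivalence of games, not merely of objective functions. Because the change of variables is an affine shift and $K$ is constant, $\nabla U(\theta,\omega) = \nabla U'(\tilde\theta,\tilde\omega)$ componentwise, so any update rule depending only on the gradients of the utility (including Implicit Update and Predictive Methods as used later) produces iterates $\{(\theta_t,\omega_t)\}$ that are in bijection with iterates $\{(\tilde\theta_t,\tilde\omega_t)\}$ of the shifted game. In particular, saddle points, convergence, and regret all transfer.

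The only real subtlety will be clarifying the meaning of ``linear in $\theta$ and $\omega$'': if one reads it as strictly linear (no constant term and no bilinear coupling), the claim is trivial but uninteresting, so I would state explicitly at the outset that I mean bilinear plus affine, i.e.\ the general form above. Once this is pinned down, everything else is routine and the invertibility of $C$ does all the actual work.
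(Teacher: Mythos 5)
Your proof is correct and takes essentially the same route as the paper: both perform an affine shift (you center at the saddle point $(\theta^*,\omega^*)$; the paper picks $(\theta_0,\omega_0)$ to cancel the linear terms, and one checks these are the same translation up to sign) and both invoke invertibility of $C$ to solve for the shift. Your final paragraph, arguing that matching gradients under the shift gives an equivalence of \emph{games} and not merely of objective functions, makes explicit a step the paper leaves implicit, but it does not change the underlying argument.
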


\begin{proof}

Since $U$ is linear, we can write

$$U(\theta,\omega) = v_1^T\theta + \theta^T C \omega + v_2^T \omega + c$$

Then, applying a translation:

$$U(\theta-\theta_0,\omega-\omega_0) = v_1^T(\theta-\theta_0) + (\theta-\theta_0)^T C (\omega-\omega_0) + v_2^T (\omega-\omega_0) + c$$

We note that $v_1^T \theta_0$, $\theta_0^T C \omega_0$, and $v_2^T\omega_0$ are all constants, so we can group them together with $c$ into a constant $c'$. Then, we have

$$U(\theta-\theta_0,\omega-\omega_0) = v_1^T\theta + \theta^T C \omega -\theta_0^T C \omega - \theta^T C \omega_0 + v_2^T \omega + c'$$

We can now set $\theta_0^TC = v_2^T$ and $C\omega = v_1$, and we end up with

$$U(\theta-\theta_0,\omega-\omega_0) = \theta^T C \omega + c'$$

and solving this game is equivalent to solving

$$U'(\theta,\omega) = \theta^T C \omega$$

so we can assume WLOG that we are optimizing a utility function

$$U(\theta,\omega) = \theta^T C \omega.$$

\end{proof}

Now, we are not actually working with linear functions. Rather, we are working in a setting where both players are represented by overparameterized neural networks. Hence, for finite widths, we instead have

$$U(\theta,\omega) = \theta^T C \omega + U'(\theta,\omega)$$

where $U'$ is a function with gradient bounded by lemma 2.\\

\subsection{Convergence of IU}

\begin{theorem}
The IU dynamics converges to a neighborhood of size $$O\left(\frac{1}{1-1/\sqrt{2}}r^{1/3}m^{-1/6}H^{5/2}\log^{1/2}m \frac{\sqrt{\lambda_{\max}(CC^T)}}{\lambda_{\min}(CC^T)}\right)$$
around the optimum.
\end{theorem}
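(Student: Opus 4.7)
The plan is to carry the perturbation term $U'$ through the contraction analysis of IU on purely bilinear games from \cite{liang2019interaction}, treating it as an additive error at each step. Writing $U(\theta,\omega) = \theta^T C\omega + U'(\theta,\omega)$ with $\|\nabla U'\|_2 = O(r^{1/3}m^{-1/6}H^{5/2}\log^{1/2}m)$ by Lemma 2, and setting $z_t = (\theta_t,\omega_t)$ with $z^*$ denoting the saddle point of the perturbed game, the goal is to derive a one-step recursion
\[
\|z_{t+1}-z^*\|_2 \;\le\; \tfrac{1}{\sqrt{2}}\,\|z_t-z^*\|_2 \;+\; \epsilon,
\]
with additive error $\epsilon = O\bigl(\|\nabla U'\|_2 \cdot \sqrt{\lambda_{\max}(CC^T)}/\lambda_{\min}(CC^T)\bigr)$. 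Iterating this geometric recursion yields a steady-state neighborhood of radius $\epsilon/(1-1/\sqrt{2})$, which is precisely the bound in the theorem statement.

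To set up the recursion, I would rewrite the IU update as an implicit linear-plus-perturbation equation $(I - \eta\tilde J)z_{t+1} = z_t - \eta g(z_{t+1})$, where $\tilde J = \begin{pmatrix} 0 & -C \\ C^T & 0 \end{pmatrix}$ is the bilinear Jacobian and $g(\cdot)$ packages the gradients of $U'$. Subtracting the stationarity condition $\tilde J z^* = g(z^*)$ gives a clean expression for $z_{t+1}-z^*$ as the sum of $(I-\eta\tilde J)^{-1}(z_t-z^*)$ and a term proportional to $(I-\eta\tilde J)^{-1}\bigl(g(z_{t+1})-g(z^*)\bigr)$. Since $\tilde J$ is antisymmetric with singular values equal to those of $C$, the Liang--Stokes bound $\|(I-\eta\tilde J)^{-1}\|_2 \le 1/\sqrt{2}$ holds whenever $\eta \ge 1/\sqrt{\lambda_{\min}(CC^T)}$, supplying the contraction factor. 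The spectral factor in $\epsilon$ arises from two sources: the step-size multiplier $\eta$ hitting $\|g\|_2$ in the update, and the displacement of the perturbed saddle from the bilinear saddle, which by $\tilde J z^* = g(z^*)$ satisfies $\|z^*\|_2 \le \|g\|_2/\sqrt{\lambda_{\min}(CC^T)}$. Combining these contributions with the tuned step size produces the $\sqrt{\lambda_{\max}(CC^T)}/\lambda_{\min}(CC^T)$ dependence.

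The main obstacle I anticipate is handling the implicit dependence on $z_{t+1}$ once $g$ is nonlinear. The pure bilinear case admits a closed-form solve, whereas here I must first prove existence and uniqueness of the fixed point by applying the contraction mapping theorem to the map $z \mapsto (I-\eta\tilde J)^{-1}(z_t - \eta g(z))$; this requires $\eta$ times the Lipschitz constant of $\nabla U'$ to be strictly less than one, which by Lemma 2 holds for sufficiently large width $m$. A secondary concern is maintaining the invariant $\|W_t - W_0\|_2 \le B$ along the entire trajectory so that Lemma 2 remains applicable at every step; I would enforce this by induction, using the contraction together with a sufficiently small initial radius to guarantee that once iterates enter the NTK regime they do not escape.
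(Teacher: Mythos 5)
Your high-level strategy mirrors the paper's: rewrite IU as $(I+\eta J)z_{t+1} = z_t - \eta g(z_{t+1})$ with the NTK remainder $g = \nabla U'$ as an additive perturbation, lower-bound the smallest singular value of $I+\eta J$ to get a contraction, and identify the fixed point of the resulting geometric recursion. There are, however, two genuine differences and one genuine gap.

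The differences. First, you tune $\eta \ge 1/\sqrt{\lambda_{\min}(CC^T)}$ to get a clean contraction factor of $1/\sqrt{2}$, whereas the paper sets $\eta = 1/\sqrt{\lambda_{\max}(CC^T)}$, leading to the weaker (closer to $1$) factor $1 - (1-1/\sqrt{2})\lambda_{\min}(CC^T)/\lambda_{\max}(CC^T)$. Second, you measure distance to the saddle $z^*$ of the \emph{perturbed} game, which requires separately establishing existence of $z^*$ and bounding $\|z^*\|_2$; the paper instead bounds $\|(\theta_t,\omega_t)\|_2$ directly, i.e.\ distance to the \emph{bilinear} saddle at the origin, sidestepping any discussion of the perturbed saddle. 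Your choice is arguably cleaner conceptually, but it costs you an extra lemma (existence and location of $z^*$) and, since your step size is larger by a factor of $\sqrt{\lambda_{\max}/\lambda_{\min}}$, a correspondingly stricter width requirement to make the implicit equation solvable by contraction mapping (you need $\eta\cdot\mathrm{Lip}(\nabla U') < 1$, i.e.\ $\mathrm{Lip}(\nabla U') < \sqrt{\lambda_{\min}(CC^T)}$, whereas the paper only needs it smaller than $\sqrt{\lambda_{\max}(CC^T)}$).

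The gap. Your own calculation does \emph{not} produce the claimed $\sqrt{\lambda_{\max}(CC^T)}/\lambda_{\min}(CC^T)$ dependence. Both of the "sources" you cite are controlled by $\lambda_{\min}$ alone: the step-size multiplier contributes $\eta\|g\|_2 \approx \|g\|_2/\sqrt{\lambda_{\min}(CC^T)}$ once you fix $\eta = 1/\sqrt{\lambda_{\min}(CC^T)}$, and the displacement of the perturbed saddle contributes, by your own estimate, $\|g\|_2/\sqrt{\lambda_{\min}(CC^T)}$. Putting these through your geometric recursion gives a steady-state radius of order $\|g\|_2/\sqrt{\lambda_{\min}(CC^T)}$, not $\|g\|_2\sqrt{\lambda_{\max}(CC^T)}/\lambda_{\min}(CC^T)$. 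Since $\sqrt{\lambda_{\min}} \le \sqrt{\lambda_{\max}}$, your route (if carried out carefully) actually gives a \emph{tighter} neighborhood than the theorem, so the theorem as stated is still true, but the statement "combining these contributions\ldots produces the $\sqrt{\lambda_{\max}}/\lambda_{\min}$ dependence" is not a correct derivation of that factor. In the paper that factor arises only because the contraction per step degrades to $1 - (1-1/\sqrt{2})\lambda_{\min}/\lambda_{\max}$ under the smaller step size $\eta = 1/\sqrt{\lambda_{\max}}$; dividing the per-step error $\eta\alpha$ by that $1 - \text{contraction}$ quantity is precisely where the condition-number-like factor comes from. If you want to present the bound your route actually yields, state $O(\alpha/\sqrt{\lambda_{\min}(CC^T)})$ and note it subsumes the theorem; if you want to reproduce the theorem's bound literally, switch to the paper's step size and abandon the claim of a $1/\sqrt{2}$ per-step contraction.
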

\begin{proof}

Recall the IU dynamics:

$$\theta_{t+1} = \theta_t - \eta \nabla_\theta U(\theta_{t+1}, \omega_{t+1})$$
$$\omega_{t+1} = \omega_t - \eta \nabla_\omega U(\theta_{t+1}, \omega_{t+1})$$

We can rewrite this as:

$$\begin{bmatrix}\theta_{t+1} \\ \omega_{t+1} \end{bmatrix} = \begin{bmatrix}\theta_{t} \\ \omega_{t} \end{bmatrix} - \eta \begin{bmatrix}0 & C \\ -C^T & 0\end{bmatrix} \cdot \begin{bmatrix}\theta_{t+1} \\ \omega_{t+1} \end{bmatrix} + \eta \begin{bmatrix} \nabla U'_\theta(\theta_{t+1}, \omega_{t+1})\\ \nabla U'_\omega(\theta_{t+1},\omega_{t+1}) \end{bmatrix}$$

Now, rearranging some terms,

$$\left(I + \eta \begin{bmatrix}0 & C \\ -C^T & 0\end{bmatrix}\right)\begin{bmatrix}\theta_{t+1} \\ \omega_{t+1} \end{bmatrix} = \begin{bmatrix}\theta_{t} \\ \omega_{t} \end{bmatrix} + \eta \begin{bmatrix} \nabla U'_\theta(\theta_{t+1}, \omega_{t+1})\\ \nabla U'_\omega(\theta_{t+1},\omega_{t+1}) \end{bmatrix}$$

If we now apply lemma 2 and the triangle inequality,

$$\left(I + \eta \begin{bmatrix}0 & C \\ -C^T & 0\end{bmatrix}\right)\norm{\begin{bmatrix}\theta_{t+1} \\ \omega_{t+1} \end{bmatrix}}_2 \le \norm{\begin{bmatrix}\theta_{t} \\ \omega_{t} \end{bmatrix}}_2 + O\left(r^{1/3}m^{-1/6}H^{5/2}\log^{1/2}m \eta\right)$$

We can bound the LHS by taking the largest singular value of $K = I + \eta \begin{bmatrix}0 & C \\ -C^T & 0\end{bmatrix}$. Note that

$$KK^T = \begin{bmatrix} I +\eta^2 CC^T & 0 \\ 0 & I+\eta^2 C^TC \end{bmatrix}$$

so the singular values are at least $\sqrt{1 + \eta^2 \lambda_{\min}(CC^T)}$. If we plug in $\eta = \frac{1}{\sqrt{\lambda_{\max}(CC^T)}}$, then we can make the approximation:

$$\sigma_{\min}(K) \ge \sqrt{1 + \eta^2 \lambda_{\min}(CC^T)} \ge 1+\left(\sqrt{2}-1\right)\frac{\lambda_{\min}(CC^T)}{\lambda_{\max}(CC^T)}$$

where $\sigma_{\min}(K)$ is the least singular value of $K$.\\

Thus, if we now plug in $c$ for the constant of the big-O term,

$$\norm{\begin{bmatrix}\theta_{t+1} \\ \omega_{t+1} \end{bmatrix}}_2 \le \frac{1}{1+(\sqrt{2}-1)\lambda_{\min}(CC^T)/\lambda_{\max}(CC^T)}\norm{\begin{bmatrix}\theta_{t} \\ \omega_{t} \end{bmatrix}}_2 + cr^{1/3}m^{-1/6}H^{5/2}\log^{1/2}m \eta$$
$$\norm{\begin{bmatrix}\theta_{t+1} \\ \omega_{t+1} \end{bmatrix}}_2 \le \left(1 - \left(1-\frac{1}{\sqrt{2}}\right)\frac{\lambda_{\min}(CC^T)}{\lambda_{\max}(CC^T)}\right)\norm{\begin{bmatrix}\theta_{t} \\ \omega_{t} \end{bmatrix}}_2 + cr^{1/3}m^{-1/6}H^{5/2}\log^{1/2}m \eta$$

So the smallest distance we can guarantee we reach is $(\theta,\omega)$ such that

$$\norm{\begin{bmatrix} \theta \\ \omega \end{bmatrix}}_2\left(1-\frac{1}{\sqrt{2}}\right)\frac{\lambda_{\min}(CC^T)}{\lambda_{\max}(CC^T)} = cr^{1/3}m^{-1/6}H^{5/2}\log^{1/2}m \eta$$

or equivalently,

$$\norm{\begin{bmatrix} \theta \\ \omega \end{bmatrix}}_2 = \frac{c}{1-1/\sqrt{2}}r^{1/3}m^{-1/6}H^{5/2}\log^{1/2}m\frac{\sqrt{\lambda_{\max}(CC^T)}}{\lambda_{\min}(CC^T)}$$

and this goes to $0$ as $m \rightarrow \infty$.\\

Now, to analyze the time it takes to converge to this bound, suppose that we have

$$\norm{\begin{bmatrix} \theta_t \\ \omega_t\end{bmatrix}}_2 \ge \frac{c}{1-1/\sqrt{2}}r^{1/3}m^{-1/6}H^{5/2}\log^{1/2}m\frac{\sqrt{\lambda_{\max}(CC^T)}}{\lambda_{\min}(CC^T)} + \delta$$

Then, we can upper bound the linear approximation error term with

$$\left(1-\frac{1}{\sqrt{2}}\right)\frac{\lambda_{\min}(CC^T)}{\lambda_{\max}(CC^T)}\left(\norm{\begin{bmatrix} \theta_t \\ \omega_t\end{bmatrix}}_2-\delta\right)$$

so we get that 

$$\norm{\begin{bmatrix}\theta_{t+1} \\ \omega_{t+1} \end{bmatrix}}_2 \le \norm{\begin{bmatrix}\theta_{t} \\ \omega_{t} \end{bmatrix}}_2 - \left(1 - \frac{1}{\sqrt{2}}\right) \frac{\lambda_{\min}(CC^T)}{\lambda_{\max}(CC^T)}\delta$$

which means our distance to the neighborhood shrinks by a factor of at least $\left(1 - \left(1-\frac{1}{\sqrt{2}}\right)\frac{\lambda_{\min}(CC^T)}{\lambda_{\max}(CC^T)}\right)$, giving us the bound that after $T$ steps, we will be at distance at most 

$$\frac{c}{1-1/\sqrt{2}}r^{1/3}m^{-1/6}H^{5/2}\log^{1/2}m\frac{\sqrt{\lambda_{\max}(CC^T)}}{\lambda_{\min}(CC^T)} + \epsilon$$

where

$$T \ge \left\lceil (2+\sqrt{2}) \frac{\lambda_{\max}(CC^T)}{\lambda_{\min}(CC^T)} \log \frac{r}{\epsilon} \right\rceil$$

\end{proof}

\subsection{Convergence of PM}

\begin{theorem}
The PU dynamics converges to a neighborhood of size
$$\frac{\gamma \lambda_{\min}(CC^T)}{\lambda_{\max}(CC^T) + \gamma^2 \lambda^2_{\max}(CC^T)}\frac{(\gamma+1) \alpha}{\beta}$$
around the optimum.
\end{theorem}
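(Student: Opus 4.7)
My plan is to mirror the block-matrix analysis used for the IU theorem, with extra bookkeeping to track the prediction step. Recall that PM first forms $(\theta_{t+1/2}, \omega_{t+1/2}) = (\theta_t, \omega_t) - \gamma \eta \nabla U(\theta_t, \omega_t)$ and then updates $(\theta_{t+1}, \omega_{t+1}) = (\theta_t, \omega_t) - \eta \nabla U(\theta_{t+1/2}, \omega_{t+1/2})$. Using the decomposition $U = \theta^T C \omega + U'$ with $J = \begin{bmatrix}0 & C \\ -C^T & 0\end{bmatrix}$ from the IU proof, substituting the prediction step into the update yields
$$\begin{bmatrix} \theta_{t+1} \\ \omega_{t+1} \end{bmatrix} = \left(I - \eta J + \gamma \eta^2 J^2\right) \begin{bmatrix} \theta_t \\ \omega_t \end{bmatrix} + \gamma \eta^2 J \nabla U'(\theta_t, \omega_t) - \eta \nabla U'(\theta_{t+1/2}, \omega_{t+1/2}).$$

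Next, I would take norms and apply the triangle inequality, using lemma 2 on each $\nabla U'$. With $\eta$ chosen so that $\eta \norm{J}_2 \approx 1$ as in the IU proof, the two error contributions combine to a single bound proportional to $(\gamma+1)$ times the lemma~2 estimate, which is the source of the $(\gamma+1)\alpha$ factor in the stated neighborhood. The operator norm of the linear part follows from the same diagonalization trick as in IU: $J$ is skew-symmetric and $J^2$ is block-diagonal with blocks $-CC^T$ and $-C^T C$, so $J$ and $J^2$ are simultaneously diagonalizable, and the singular values of $I - \eta J + \gamma \eta^2 J^2$ reduce to a scalar expression in the eigenvalues of $CC^T$. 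Optimizing over $\eta$ should produce a contraction factor whose gap from $1$ takes the form $\gamma \lambda_{\min}(CC^T)/(\lambda_{\max}(CC^T) + \gamma^2 \lambda_{\max}^2(CC^T))$, matching the first fraction in the theorem.

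With a recurrence $\norm{(\theta_{t+1},\omega_{t+1})}_2 \le \rho \norm{(\theta_t,\omega_t)}_2 + \eta (\gamma+1)\alpha$ and $\rho < 1$ in hand, the fixed-point argument from the IU proof carries over verbatim: iterates can only stabilize inside a ball of radius $\eta(\gamma+1)\alpha/(1-\rho)$, which upon substituting $1 - \rho$ yields exactly the stated bound, and the distance to this ball shrinks by a factor of $\rho$ per step just as before.

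The main obstacle I anticipate is the singular-value computation for $I - \eta J + \gamma \eta^2 J^2$. In the IU case, $(I+\eta J)(I+\eta J)^T$ collapses to $I + \eta^2 JJ^T$ purely because $J$ is skew-symmetric, but here the product picks up additional terms of the form $\eta^2 J^2$ and $\gamma^2 \eta^4 J^4$ with mixed signs. The scalar eigenvalue expression $1 - (2\gamma - 1)\eta^2 \mu + \gamma^2 \eta^4 \mu^2$, where $\mu$ ranges over eigenvalues of $CC^T$, must be analyzed jointly over $\mu \in [\lambda_{\min}(CC^T), \lambda_{\max}(CC^T)]$, and $\eta$ must be tuned so that the resulting worst-case contraction factor matches the form stated in the theorem without the $\gamma \lambda_{\min}$ numerator being destroyed by the quadratic amplification term.
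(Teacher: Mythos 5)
Your overall plan is the same as the paper's: decompose $U = \theta^T C\omega + U'$, substitute the prediction step into the gradient step to get a linear iteration matrix plus two error vectors, bound the iteration matrix's top singular value by reducing to a scalar expression in the eigenvalues of $CC^T$, bound the error vectors via Lemma~2, and close the argument with the same fixed-point recursion used for IU. That skeleton is correct, and the paper does not carry out the singular-value optimization either---it simply reads off the contraction factor $\sqrt{1 - \gamma^2\lambda^2_{\min}(CC^T)/(\gamma^2\lambda^2_{\max}(CC^T)+\lambda_{\max}(CC^T))}$ from Liang and Stokes at the step size $\eta = \gamma\lambda_{\min}(CC^T)/(\lambda_{\max}(CC^T)+\gamma^2\lambda^2_{\max}(CC^T))$, so your deferral of that computation matches the paper.

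There are two concrete gaps. First, you have misremembered the PM prediction step: the paper uses $\theta_{t+1/2} = \theta_t - \gamma\nabla_\theta U(\theta_t,\omega_t)$ (no $\eta$), not $\theta_{t+1/2} = \theta_t - \gamma\eta\nabla_\theta U(\theta_t,\omega_t)$. This changes the linear part from $I - \eta J + \eta\gamma J^2$ (whose block form $I - \eta\begin{bmatrix}\gamma CC^T & C \\ -C^T & \gamma C^T C\end{bmatrix}$ is exactly the Liang--Stokes PM matrix) to your $I - \eta J + \gamma\eta^2 J^2$, and it changes the first error coefficient from $\eta\gamma$ to $\gamma\eta^2$, so your scalar eigenvalue polynomial $1 - (2\gamma - 1)\eta^2\mu + \gamma^2\eta^4\mu^2$ will not produce the contraction factor quoted in the theorem without an incompatible rescaling of $\gamma$.

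Second, and more substantively, your derivation of the $(\gamma+1)\alpha$ factor leans on the assumption ``$\eta\|J\|_2 \approx 1$ as in the IU proof.'' That is the IU step size, not the PM one: with the PM choice $\eta = \gamma\lambda_{\min}(CC^T)/(\lambda_{\max}(CC^T)+\gamma^2\lambda^2_{\max}(CC^T))$ one gets $\eta\|J\|_2 = \gamma\lambda_{\min}/(\sqrt{\lambda_{\max}}(1+\gamma^2\lambda_{\max}))$, which is typically far from $1$. You cannot simultaneously have $\eta$ tuned to the Liang--Stokes contraction rate and $\eta\|J\|_2\approx 1$. The paper avoids this tension by writing the first error term simply as $-\eta\gamma[\nabla_\theta U'(\theta_t,\omega_t);\nabla_\omega U'(\theta_t,\omega_t)]$ with no $C$ factor in front (so both error terms are bounded directly by $\eta\gamma\alpha$ and $\eta\alpha$, summing to $\eta(\gamma+1)\alpha$); your more careful bookkeeping retains the $J$ on the first error term, which is arguably the correct algebra, but it then leaves a residual factor of $\|C\|$ that your step-size argument does not legitimately remove. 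To land on the paper's stated bound you would either need to adopt the paper's (looser) form of the error term or carry the $\|C\|$ factor through to a slightly different neighborhood radius.
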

\begin{proof}

The PM dynamics are defined as follows:\\

Predictive step:
$$\theta_{t+1/2} = \theta_t - \gamma \nabla_\theta U(\theta_t, \omega_t)$$
$$\omega_{t+1/2} = \omega_t - \gamma \nabla_\omega U(\theta_t, \omega_t)$$
Gradient step:
$$\theta_{t+1} = \theta_t - \eta \nabla_\theta U(\theta_{t+1/2}, \omega_{t+1/2})$$
$$\omega_{t+1} = \omega_t - \eta \nabla_\omega U(\theta_{t+1/2}, \omega_{t+1/2})$$

Furthermore, for notational convenience, we will let

$$\alpha = cr^{1/3}m^{-1/6}H^{5/2}\log^{1/2}m$$

Now, to prove the convergence, we analyze the vector of errors in the term $\eta \nabla_\theta U(\theta_{t+1/2}, \omega_{t+1/2})$ when compared to if we replaced $U$ with its linear approximation.

We let $\theta'_{t+1/2} = \theta_t - \gamma C \omega_t$ and $\omega'_{t+1/2} = \omega_t - \gamma C \theta_t$. This is the predictive step we would have gotten with the linear approximation. Similarly, we let $\theta'_{t+1}$ and $\omega'_{t+1}$ be the updates according to the linear approximation using $\theta'_{t+1/2}$ and $\omega'_{t+1/2}$.

$$\theta_{t+1} = \theta_t - \eta C \omega_{t+1/2} + \eta\nabla_\theta U'(\theta_{t+1/2}, \omega_{t+1/2}) = \theta_t - \eta C \omega'_{t+1/2} - \eta \gamma C \nabla_{\theta} U'(\theta_t,\omega_t) + \eta\nabla_\theta U'(\theta_{t+1/2}, \omega_{t+1/2})$$

and similarly for $\omega_{t+1}$.

Now, we note that this is the same as the equation we get with the exact bilinear form, except we're adding two error terms involving $\nabla_\theta U'$. As such, just as above, we can plug in $\omega'_{t+1/2} = \omega_t - \gamma C \theta_t$ and we get

$$\begin{bmatrix} \theta_{t+1} \\ \omega_{t+1} \end{bmatrix} = \left(I - \eta \begin{bmatrix} \gamma CC^T & C \\ -C^T & \gamma C^TC \end{bmatrix} \right) \cdot \begin{bmatrix} \theta_t \\ \omega_t \end{bmatrix} - \eta \gamma \begin{bmatrix} \nabla_\theta U'(\theta_t, \omega_t) \\ \nabla_\omega U'(\theta_t, \omega_t) \end{bmatrix} + \eta \begin{bmatrix} \nabla_\theta U'(\theta_{t+1/2}, \omega_{t+1/2}) \\ \nabla_\omega U'(\theta_{t+1/2}, \omega_{t+1/2}) \end{bmatrix}$$

Now, we plug in

$$ \eta = \frac{\gamma \lambda_{\min}(CC^T)}{\lambda_{\max}(CC^T) + \gamma^2 \lambda^2_{\max}(CC^T)}$$

and similarly to above, we can bound the largest singular value of $K = \left(I - \eta \begin{bmatrix} \gamma CC^T & C \\ -C^T & \gamma C^TC \end{bmatrix} \right)$. Doing this exactly as in Liang and Stokes, we get that it is upper bounded by

$$\sqrt{1 - \frac{\gamma^2 \lambda^2_{\min}(CC^T)}{\gamma^2 \lambda^2_{\max}(CC^T) + \lambda_{\max}(CC^T)}}.$$

For convenience, let this be $1-\beta$. Then, we have

$$\norm{\begin{bmatrix} \theta_{t+1} \\ \omega_{t+1} \end{bmatrix}}_2 \le (1-\beta) \cdot \norm{\begin{bmatrix} \theta_t \\ \omega_t \end{bmatrix}}_2 + \eta \gamma \alpha + \eta \alpha$$

So just like with IU, we get that the value the norm converges to is

$$\beta \norm{\begin{bmatrix} \theta \\ \omega \end{bmatrix}}_2 = \eta \gamma \alpha + \eta \alpha$$

$$\norm{\begin{bmatrix} \theta \\ \omega \end{bmatrix}}_2 =  \frac{\gamma \lambda_{\min}(CC^T)}{\lambda_{\max}(CC^T) + \gamma^2 \lambda^2_{\max}(CC^T)}\frac{(\gamma+1) \alpha}{\beta}$$

and since the error terms just give us an additive factor, we end up getting the same rate of convergence just like with IU. More specifically, after

$$T \ge 2\frac{\gamma^2 \lambda_{\max}^2(CC^T)+\lambda_{\max}(CC^T)}{\gamma^2\lambda_{\min}(CC^T)}\log \frac{r}{\epsilon}$$

iterations, we have

$$\norm{\begin{bmatrix} \theta_t \\ \omega_t \end{bmatrix}}_2 \le \frac{\gamma \lambda_{\min}(CC^T)}{\lambda_{\max}(CC^T) + \gamma^2 \lambda^2_{\max}(CC^T)}\frac{(\gamma+1) \alpha}{\beta} + \epsilon$$

and again, all of these terms are constant except for $\alpha$ which goes to $0$ as the width approaches $\infty$, so in the infinite width limit this does give us exact convergence as we would expect.
\end{proof}
\section{Future Work: Solving CO and OMD}

\subsection{Attempting to work with CO}

Recall the CO dynamics: if we define the function

$$R(\theta,\omega) = \frac{1}{2} \left(\norm{\nabla_\theta U(\theta,\omega)}^2 + \norm{\nabla_\omega U(\theta,\omega)}^2\right)$$

Then,

$$\theta_{t+1} = \theta_t - \eta\left[\nabla_\theta U(\theta_t, \omega_t) + \gamma \nabla_\theta R(\theta_t, \omega_t)\right]$$
$$\omega_{t+1} = \omega_t - \eta\left[\nabla_\omega U(\theta_t, \omega_t) + \gamma \nabla_\omega R(\theta_t, \omega_t)\right]$$

In the simple bilinear case analyzed in Liang and Stokes, this is identical to PM. However, in our case, if we expand out $R(\theta, \omega)$, we get

$$R(\theta,\omega) = \frac{1}{2} \left(\norm{C\omega + \nabla_\theta U'(\theta,\omega)}^2 + \norm{C \theta + \nabla_\omega U'(\theta,\omega)}^2\right)$$
$$ = C\omega \omega^T C^T + 2\omega^TC^T\nabla_\theta U'(\theta,\omega) + \nabla_\theta U'(\theta,\omega)^T\nabla_\theta U'(\theta,\omega) + C\theta \theta^T C^T + 2\theta^TC^T\nabla_\omega U'(\theta,\omega) + \nabla_\omega U'(\theta,\omega)^T\nabla_\omega U'(\theta,\omega)$$

So when we try to compute $\theta_{t+1}$, we get many terms with complicated second order derivatives that I was not able to compute bounds for in the time frame of this project. However, if we are able to bound the L2 norm of the gradient of each of the terms above, then we should be able to finish in the same manner as with IU and PM.

\subsection{Attempting to work with OMD}

Recall the definition of OMD:

$$\theta_{t+1} = \theta_t - 2\eta \nabla_\theta U(\theta_t, \omega_t) + \eta \nabla_\theta U(\theta_{t-1}, \omega_{t-1})$$
$$\omega_{t+1} = \omega_t - 2\eta \nabla_\omega U(\theta_t, \omega_t) + \eta \nabla_\omega U(\theta_{t-1}, \omega_{t-1})$$

In this case, we have an even larger issue. When we apply our update rule, if we define $R_1$ and $R_2$ as follows like in the bilinear case:

$$R_1 = \frac{\left(I - 2\eta \begin{bmatrix}0 & C \\ -C^T & 0 \end{bmatrix}\right) + \left(I - 4\eta^2 \begin{bmatrix} CC^T & 0 \\ 0 & C^TC \end{bmatrix}\right)^{1/2}}{2}$$
$$R_2 = \frac{\left(I - 2\eta \begin{bmatrix}0 & C \\ -C^T & 0 \end{bmatrix}\right) - \left(I - 4\eta^2 \begin{bmatrix} CC^T & 0 \\ 0 & C^TC \end{bmatrix}\right)^{1/2}}{2}$$

Now, our equation when we include the error terms looks like this:

$$\begin{bmatrix}\theta_{t+1} \\ \omega_{t+1} \end{bmatrix} = (R_1 + R_2)\begin{bmatrix} \theta_t \\ \omega_t \end{bmatrix} - R_1R_2 \begin{bmatrix}\theta_{t-1} \\ \omega_{t-1} \end{bmatrix} - 2\eta \begin{bmatrix}\nabla_\theta U'(\theta_t, \omega_t) \\ \nabla_\omega U'(\theta_t, \omega_t) \end{bmatrix} + \eta \begin{bmatrix}\nabla_\theta U'(\theta_{t-1}, \omega_{t-1}) \\ \nabla_\omega U'(\theta_{t-1}, \omega_{t-1}) \end{bmatrix}$$

If we attempt to group the terms to establish a recursion like in Liang and Stokes, we end up getting something that looks like

$$\begin{bmatrix}\theta_{t+1} \\ \omega_{t+1} \end{bmatrix} - R_2 \begin{bmatrix} \theta_t \\ \omega_t \end{bmatrix} + 2\eta \begin{bmatrix}\nabla_\theta U'(\theta_t, \omega_t) \\ \nabla_\omega U'(\theta_t, \omega_t) \end{bmatrix} = R_1\left(\begin{bmatrix} \theta_t \\ \omega_t \end{bmatrix} - R_2 \begin{bmatrix}\theta_{t-1} \\ \omega_{t-1} \end{bmatrix}  + \eta R_1^{-1} \begin{bmatrix}\nabla_\theta U'(\theta_{t-1}, \omega_{t-1}) \\ \nabla_\omega U'(\theta_{t-1}, \omega_{t-1}) \end{bmatrix}\right)$$

Unfortunately, $R_{-1}$ is most certainly not $2I$, so we cannot apply the recursion all the way down on the right hand side like we can with the bilinear case. As such, we would need to figure out a new way to analyze this if we want to prove its convergence (if it even converges).

\section{A Note on Motivating Early Stopping}

In my slides, I mistakenly believed that all of the weights varied by $O(m^{-1/2})$ from initialization, and missed the fact that the last layer can vary arbitrarily. As such, the intuition that the initialization got close to the optimum faster than the training was false.\\

Even so, there is a different idea which may motivate early stopping found in my methods here:\\

One can note that my bounds are rather loose: the vector we are adding from our error term is not necessarily in the same direction as our current state. As such, if we assume the error to be a random vector chosen uniformly from all sufficiently small vectors, we should instead expect it to sometimes even bring us closer to $0$. As such, as soon as we are within say $\alpha/10$ of the bound proven for either PM or IU here, there is a high probability (close to 1/2) that the error term will end up pushing us inside the neighborhood that we proved we can converge to. Then, we expect to essentially bounce around within this neighborhood (note that our proof can be extended to show that once we enter the neighborhood, we will never leave it). As such, as soon as we enter the neighborhood, there does not appear to be any meaning in continuing to train unless we can determine some properties about the direction of the error vector.\\

This provides some intuition as to why early stopping may be good. If we have some way to tell that at some iteration, we are very close to the optimum (for example, maybe halfway into the ball around the optimum), then stopping there would generally be better than continuing for a fixed number of iterations past there and then stopping.

\section{Other Future Work}

I did not analyze the last-iterate convergence of Simultaneous Gradient Ascent (SGA) in the overparameterized setting here. SGA does not converge in the bilinear case, but if we consider optimizing the square loss rather than just the loss, we go from a bilinear utility function to a quadratic utility function. In this setting, we are approximating a quadratic function in both the $\theta$ player and the $\omega$ player, so we have strong convexity. As such, it may be possible to show that SGA will converge in the last-iterate using squared loss.\\

It is with squared loss that Liao et al. was able to prove the average iterate convergence of projective gradient descent. As such, this would also be a promising avenue to look into for further convergence results.

\bibliographystyle{plain}
\bibliography{biblio}

\begin{thebibliography}{1}

\bibitem{jacot2020neural}
Arthur Jacot, Franck Gabriel, and Clément Hongler.
\newblock Neural tangent kernel: Convergence and generalization in neural
  networks, 2020.

\bibitem{liang2019interaction}
Tengyuan Liang and James Stokes.
\newblock Interaction matters: A note on non-asymptotic local convergence of
  generative adversarial networks, 2019.

\bibitem{liao2020provably}
Luofeng Liao, You-Lin Chen, Zhuoran Yang, Bo~Dai, Zhaoran Wang, and Mladen
  Kolar.
\newblock Provably efficient neural estimation of structural equation model: An
  adversarial approach, 2020.

\end{thebibliography}

\end{document}